\newtheorem{proposition}{\textbf{Proposition}}
\newcommand{\etal}{\textit{et al. }}
\begin{document}
\IEEEoverridecommandlockouts
\title{FedLP: Layer-wise Pruning Mechanism for Communication-Computation Efficient Federated Learning\thanks{
        * Pingyi Fan is the corresponding author. This work was supported by the National Key Research and Development Program of China (Grant NO.2021YFA1000500(4)).}
    \thanks{\IEEEauthorrefmark{4} K. B. Letaief is supported in part by the Hong Kong Research Grant Council under Grant No. 16208921.}
    \thanks{Accepted as a conference paper by IEEE International Conference on Communications (ICC) 2023.}}




\author{\IEEEauthorblockN{Zheqi Zhu\IEEEauthorrefmark{2}, Yuchen Shi\IEEEauthorrefmark{2}, Jiajun Luo\IEEEauthorrefmark{2}, Fei Wang\IEEEauthorrefmark{3}, Chenghui Peng\IEEEauthorrefmark{3}, Pingyi Fan\IEEEauthorrefmark{2}*, and Khaled B. Letaief\IEEEauthorrefmark{4}}
    \IEEEauthorblockA{\IEEEauthorrefmark{2}Department of Electronic Engineering, Tsinghua University.\\
        Emails: \{zhuzq18, shiyc21, luo-jj18\}@mails.tsinghua.edu.cn, fpy@tsinghua.edu.cn\\
        \IEEEauthorrefmark{3}Huawei Wireless Technology Lab. Emails: \{wangfei76, pengchenghui\}@huawei.com\\
        \IEEEauthorrefmark{4}Department of ECE, Hong Kong University of Science and Technology. Email: eekhaled@ece.ust.hk
    }}


\maketitle
\IEEEpeerreviewmaketitle

\begin{abstract}
    Federated learning (FL) has prevailed as an efficient and privacy-preserved scheme for distributed learning. In this work, we mainly focus on the optimization of computation and communication in FL from a view of pruning. By adopting layer-wise pruning in local training and federated updating, we formulate an explicit FL pruning framework, FedLP (Federated Layer-wise Pruning), which is model-agnostic and universal for different types of deep learning models. Two specific schemes of FedLP are designed for scenarios with homogeneous local models and heterogeneous ones. Both theoretical and experimental evaluations are developed to verify that FedLP relieves the system bottlenecks of communication and computation with marginal performance decay. To the best of our knowledge, FedLP is the first framework that formally introduces the layer-wise pruning into FL. Within the scope of federated learning, more variants and combinations can be further designed based on FedLP.
\end{abstract}

\begin{IEEEkeywords}
    federated learning, model pruning, layer-wise aggregation, communication-computation efficiency.
\end{IEEEkeywords}

\section{Introduction}
\subsection{Backgrounds}
By locally training the distributed models and periodically updating the global model, federated learning (FL), first conceptualized in \cite{mcmahan2017communication}, provides an explicit paradigm for cooperative learning without sharing the privacy data. Instead of transmitting the data or intermediate outputs in the networks, only the model parameters are interacted, which significantly improves the communication efficiency.

With the continuous growth of the communication systems and the intelligent devices, it is possible to adopt FL schemes in numerous promising applications such as mobile edge computing (MEC), artificial intelligence of things (AIoT), and autonomous driving \cite{niknam2020federated}. Since leveraging AI in networks is envisioned as a core characteristic of 6G systems, FL has shown its powerful potentials on combinations with deep learning models \cite{letaief2019roadmap, yang2021federated}. On the one hand, FL naturally fits the structure of multi-user networks with distributed data and can be easily deployed for machine learning tasks \cite{wan2023global}. On the other hand, FL schemes are able to achieve the intelligent collaboration for multiagent systems \cite{zhu2021federated}.

Though FL has received rapid developments in terms of methods, models and applications, such distributed learning scheme still suffers from challenges in several aspects. This work addresses two key issues, the heterogeneity and the communication-computation efficiency. Firstly, as summarized in \cite{kairouz2021advances}, the heterogeneity of FL systems mainly comes from local data and the client devices. The heterogeneous local data, also referred to as non-iid data, occurs commonly in real-world distributed scenarios and usually cause the degradation in terms of model performance, convergence and stability. Meanwhile, due to the diversity in computation platforms, communication capabilities and the battery level of the devices, clients may meet different constraints in model scales and local processing. For example, some weak clients are not able to support sufficient local training, which leads to bad model performance and time delay for synchronized aggregation. In contrast, some strong clients may not fully utilize its devices' capability to get better service quality, leading to unfairness of the whole system. Thus, how to design heterogeneous models suitable for various clients is still an open problem.

Besides, communication is also a critical bottleneck for FL networks, especially those with massive number of clients. Since communication and computation are tightly coupled in FL systems, the interplay of these progresses impacts the model quality as well as the system efficiency. Therefore, in recent studies of FL, more attention is paid to optimizing FL schemes through communication compression and computing reduction \cite{lim2020federated}.


\subsection{Motivations \& Related Works}
As model pruning has been verified to be an efficient approach to reduce the model scales with the cost of marginal loss in accuracy, related technique has also been employed in the context of FL. In this work, we mainly focus on the pruning mechanism in FL to relieve the communication and computation restrictions.

In the literature, the combination of model compression and FL has already been investigated. Aiming to reduce the client resource requirements in FL systems, Caldas \etal proposed Federated Dropout in \cite{caldas2018expanding}, where the global model is compressed into sub-models for communication. Such schemes were extended in \cite{yu2021adaptive}, where the authors proposed the dataset-aware dynamic pruning approach to accelerate the inference on edge devices. Jiang \etal in \cite{jiang2022model} and Kumar \etal in \cite{kumar2022neuron} formulated an adaptive pruning strategies based on gradient information and neuron importance, respectively. An efficient private update scheme of federated sub-model learning was also discussed in \cite{vithana2022efficient}. Further, authors of \cite{liu2021adaptive,wen2022federated} theoretically concerned about the pruning configurations and the communication resource allocation in wireless FL.

However, negative findings in a critical paper \cite{cheng2022does} argued that dropout-based FL schemes may perform worse than simple ensemble methods. Actually, such pruning schemes in FL are borrowed from dropout in single machine learning. Locally training sub-models and aggregating them to the corresponding parts of the global model lacks explainability. The authors in \cite{horvath2021fjord} also pointed out that the order of the parameters cannot be neglected in pruned aggregation. Besides, most existing dropout-based schemes in FL employed the intra-layer pruning, which results in increment of system complexity, e.g., different operations for different functional layers. Chen \etal in \cite{chen2022fedobd} introduced the block dropout for large-scale neural network training. Inspired by the layer-wise aggregation in \cite{lee2021layer}, we first propose to explore layer-wise pruning mechanism to relieve above quagmires in this work.

\subsection{Contributions \& Paper Organization}
The main contributions of this work can be summarized as follows:
\begin{itemize}
    \item[$\bullet$] We first put forward a universal FL pruning framework, FedLP\footnotemark[1], employing the layer-wise pruning mechanism. FedLP can relieve restrictions of communication as well as computation in FL systems, and also potentially prevents model attacks in some degrees.
    \item[$\bullet$] We sketch two basic pruning schemes and the theoretical principle of FedLP for both homogeneous and heterogeneous cases. In particular, the heterogeneous scheme fits the scenarios where the clients vary from device types and computation capabilities, and thus, the local models shall be set adaptively.
    \item[$\bullet$] We develop experiments to evaluate the communication-computation efficiency of FedLP\footnote[1]{The codes in this work are available at \url{https://github.com/Zhuzzq/FedLP}}. The outcomes suggest that such layer-wise pruning mechanism significantly reduces the communication loads and computational complexity with controllable performance loss.
\end{itemize}

The remaining of this article is organized as follows. In Section \ref{section preliminaries} we introduce the preliminaries of this work and illustrate how the basic idea of layer-wise pruning is formulated through a simple experiment. In Section \ref{section algorithms}, we present two typical schemes of FedLP for homogeneous and heterogeneous scenarios. The corresponding algorithms and a theoretical principle will also be developed. The detailed experimental results and more discussions are presented in Section \ref{section evaluation}. Finally, in Section \ref{section conclusion}, we conclude this work and point out several potential research directions.

\section{Preliminaries and Layer-wise Pruning}
\label{section preliminaries}
In this section, we first briefly introduce the preliminaries of this work. Then, we illustrate the key idea of layer-wise pruning, which inspires us to sketch FedLP framework.

\subsection{Federated Learning}
Recapping a classical horizontal federated learning (HFL) system, there exists $N$ distributed clients with their own local datasets, $\{\mathcal{D}_1,\cdots,\mathcal{D}_N\}$, and local models, $\{\boldsymbol{\theta}_1,\cdots,\boldsymbol{\theta}_N\}$. For privacy preserving and communication efficiency, FL carries out procedures of local training and periodic model aggregation. A federated period processes as follows: 1) Clients train local models with local data; 2) Parameter server collects local models uploaded by $K$ clients and aggregates them as the global model $\bar{\boldsymbol{\theta}}$; 3) Clients download the updated global model for further training.

As for federated updating, at each global epoch $t$, HFL selects a set of participators with $K$ clients as $P_t$, and proceeds the parameter aggregation:
\vspace{-0.2em}
\begin{equation}
    \label{def-fl}
    \bar{\boldsymbol{\theta}}_t\leftarrow\sum\limits_{k\in P_t}\frac{\omega_k}{\sum_{m\in P_t}\omega_m}\boldsymbol{\theta}_{k,t},
\end{equation}
where $\{\omega_k\}$ is the aggregation weights and $\boldsymbol{\theta}_{k,t}$ is the local model of client $k$ after the local training in $t$-th global epoch. In particular, it reduces to the most popular scheme, FedAvg, when the weights are set as $\omega_k=\frac{|\mathcal{D}_k|}{\sum|\mathcal{D}_m|}$.

\subsection{A Simple Test and Layer-wise Pruning}
\label{sec simple test}
To further optimize the communication and computation progresses, pruning is a simple but efficient method. However, as mentioned above, existing pruning methods in FL are migrated from the traditional machine learning fields. Namely, the connections between neurons in MLP (multi-layer perceptron) or the filters in CNN (convolutional neural network) are dropped in order to reduce the model scales for training locally or the parameter quantities for transmitting. As shown in the left of Fig. \ref{hv pruning}, we rethink such pruning mechanism as a vertical cut since some inner-layer neurons are detached and the intermediate features of middle layers might be down-scaled. Then, it is natural to consider horizontal scheme, which conducts a layer-wise cut. In single machine learning, detaching a whole layer leads to a completely different model. Thus, such schemes are not termed as a pruning technique. Nevertheless, within the context of FL, clients shall also horizontally cut their model and contribute to the global model together, as shown in the right of Fig. \ref{hv pruning}. Briefly, vertical cut keeps all layers and drops some neurons in each layer, while horizontal cut drops some layers and keeps all neurons for preserved layers.
\begin{figure}[htbp]
    \centering
    \subfigure[Two types of pruning mechanisms.]{
        \label{hv pruning} 
        \begin{minipage}[t]{0.27\textwidth}
            \includegraphics[width=1\linewidth]{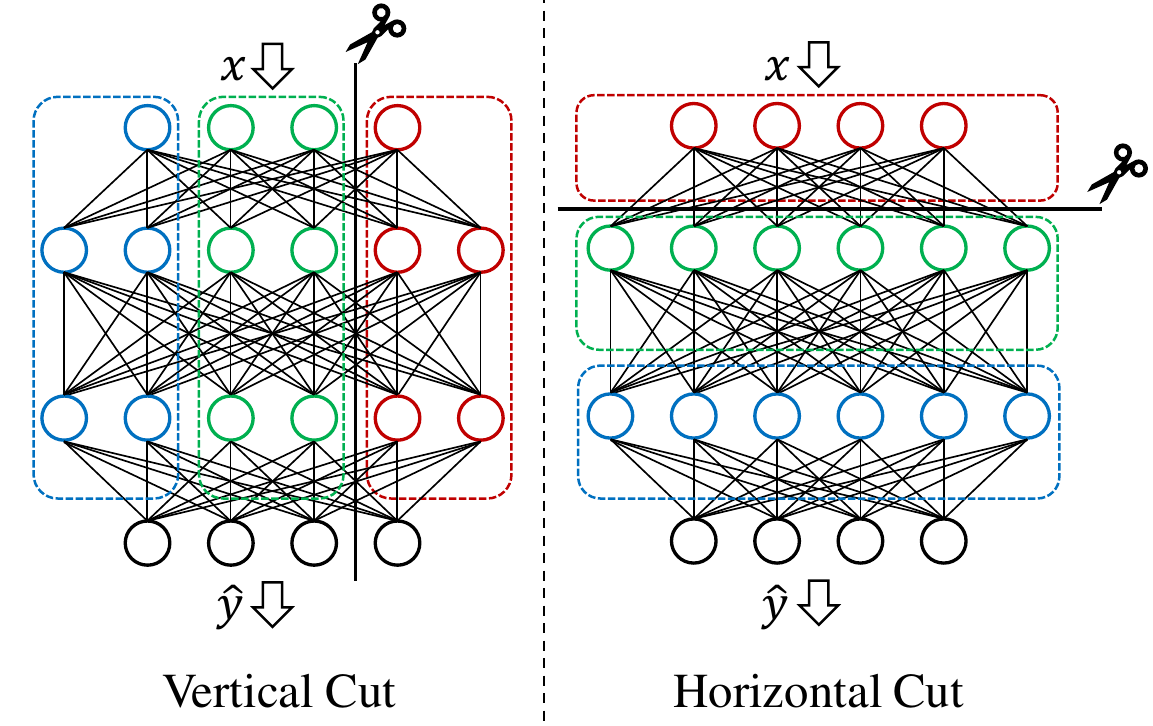}
        \end{minipage}}%
    \subfigure[Global model performance.]{
        \label{hv acc} 
        \begin{minipage}[t]{0.2\textwidth}
            \includegraphics[width=1\linewidth]{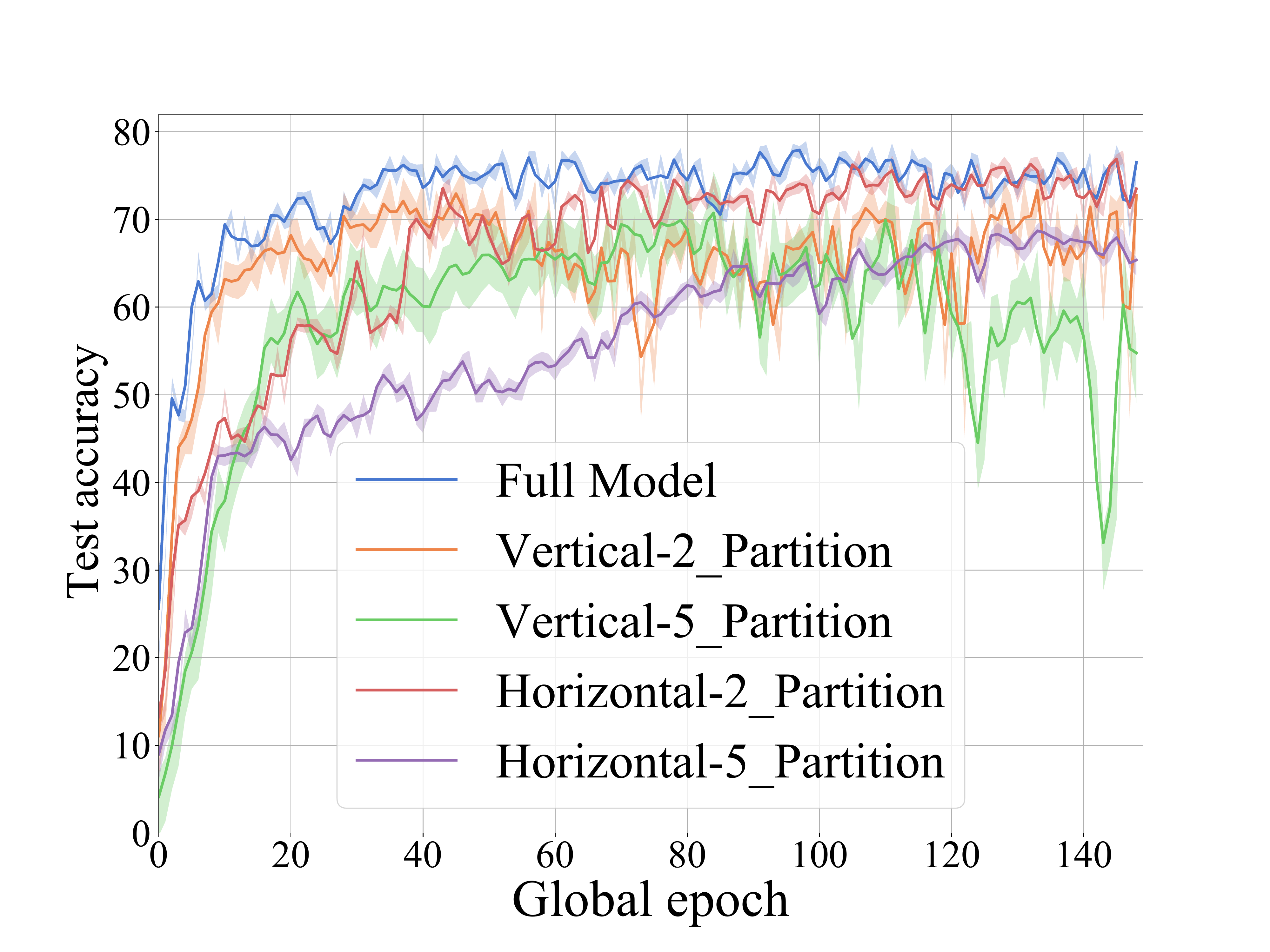}
        \end{minipage}}
    \vspace{-0.2cm}
    \caption{A simple comparison for vertical/horizontal pruning in FL.}
    \label{simple test fig} 
\end{figure}

Inspired by such intuition, we develop a simple experiment to check the performance of horizontal cut. The experiment is implemented on a FL system with 100 clients, 0.5 participation rate and non-i.i.d. data divided from Fashion-MNIST dataset. For better model performance, we only consider the pruned aggregation, which means that clients train the full model locally and only upload the pruned model. For vertical cut, the parameters within every layer is uniformly divided into 2/5 partitions and each client only upload one partition for model aggregation. Same for horizontal cut, the full whole layers are splitted into 2/5 partitions and each client upload one partitions. The comparison results are presented in Fig. \ref{hv acc}. One can find that for same partition count, horizontal cut reaps higher accuracy as well as better stability. Especially, the vertical cut with 5 partition seems not to converge after quite more global epochs.

Based on the above illustration, we deem that the horizontal cut strategies in FL have advantages over the vertical cut. Therefore, we further extend such schemes to layer-wise pruning mechanism and formulate the compressed FL framework, named as FedLP. Apart from the possible outperform beyond traditional pruning mechanism, layer-wise pruning is easier to be applied to different models. For example, clients do not have to figure out whether the model consists of fully-connected (FC) layers or convolutional (Conv) layers before pruning. This is because the whole layer is treated as the smallest pruning unit and the inner-layer structure can be neglected. Thus, we also regard the layer-wise pruning as a model-agnostic mechanism.

\section{FedLP: Frameworks and Algorithms}
\label{section algorithms}
In this section, we will formally propose the basic framework of FedLP. Then, two specific schemes and their corresponding algorithms for homogeneous and heterogeneous models will be designed respectively.

\begin{figure}[htbp]
    \centering
    \subfigure[Homogeneity scheme.]{
        \label{fig: homo scheme} 
        \begin{minipage}[t]{0.23\textwidth}
            \includegraphics[width=1\linewidth]{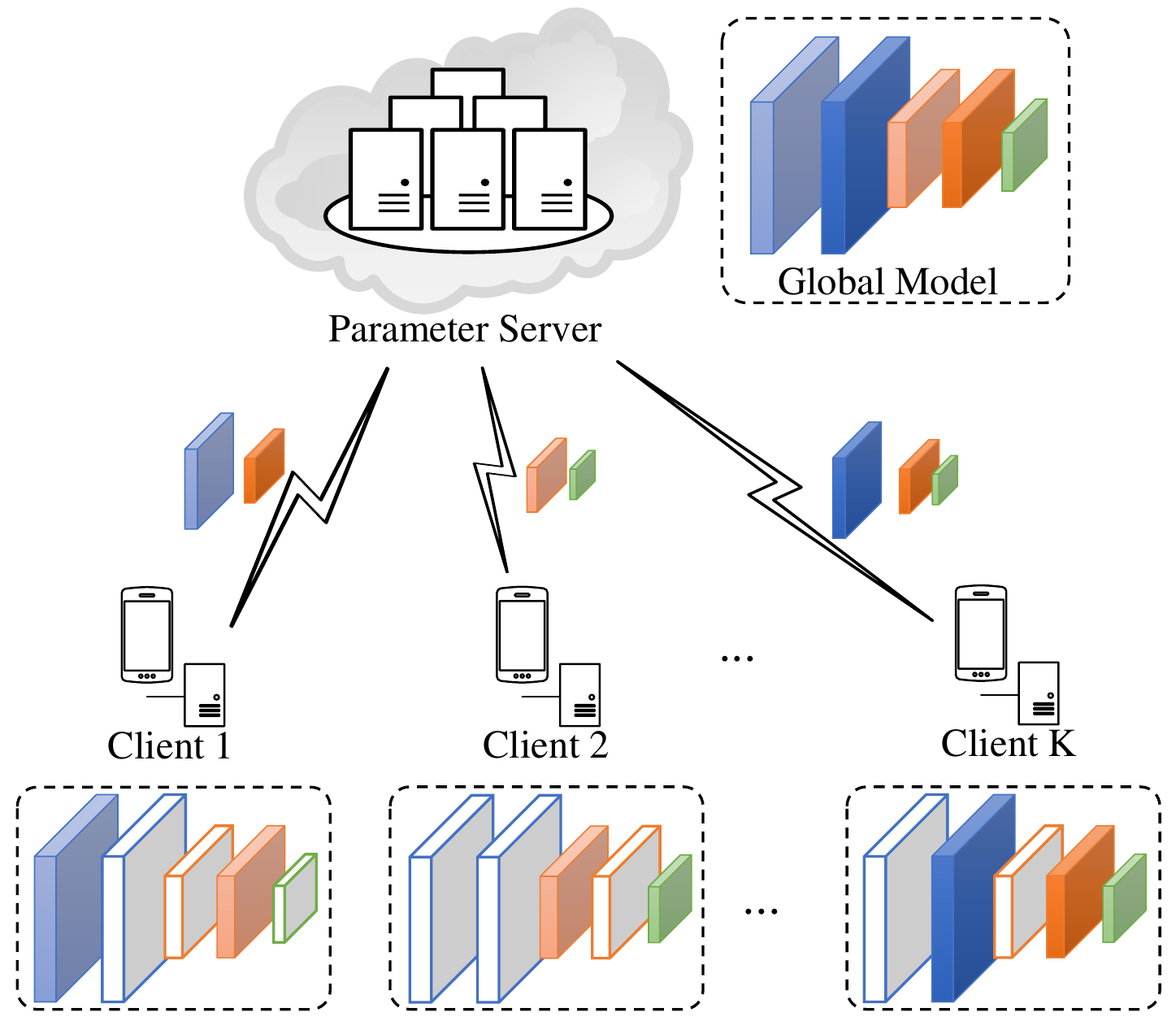}
        \end{minipage}}%
    \subfigure[Heterogeneity scheme.]{
        \label{fig: hetero scheme} 
        \begin{minipage}[t]{0.23\textwidth}
            \includegraphics[width=1\linewidth]{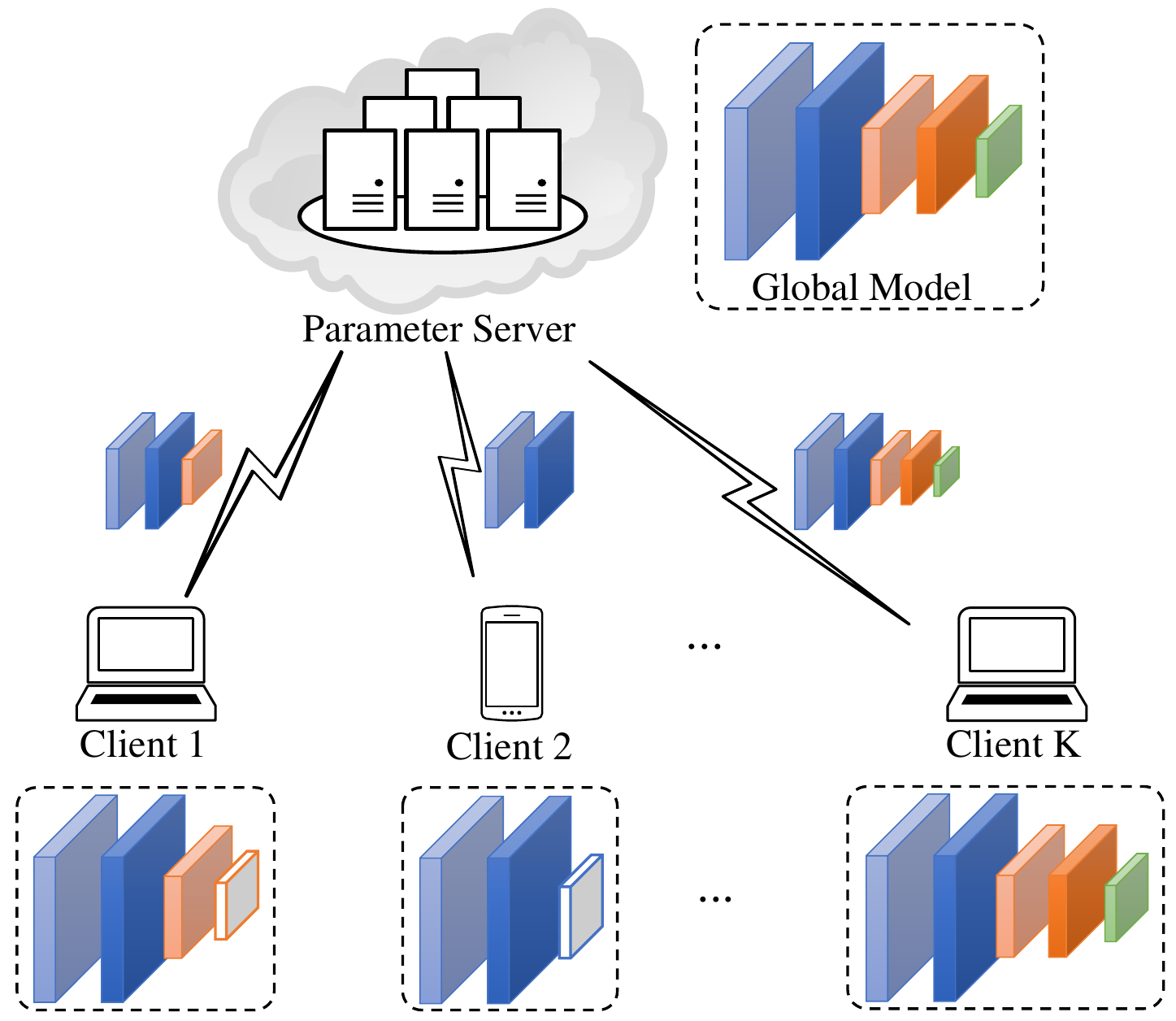}
        \end{minipage}}
    \vspace{-0.1cm}
    \caption{Two typical FedLP schemes based on different settings of local model. The inactive layers (in gray) are removed from the sub-models for uploading.}
    \vspace{-0.2cm}
    \label{fig: framework} 
\end{figure}

The key idea of FedLP is that the layer-wise pruning mechanism is adopted in the phases of local training and model aggregation. After finishing the local training of every global epoch: 1) Each client only uploads some layers to parameter server; 2) The parameter server aggregates the pruned sub-models and obtains the updated global model; 3) Clients download either the corresponding sub-models or full models to update local models. More specifically, we rewrite a full model $\boldsymbol{\theta}$ with $L$ layers as: $\boldsymbol{\theta}:=\left[\boldsymbol{\theta}^1,\boldsymbol{\theta}^2,\cdots,\boldsymbol{\theta}^L\right]$,
where $\boldsymbol{\theta}^l$ represents the parameters of $l$-th layer. Assume that each client $k$ upload layers of index in $\mathcal{L}_k$ after pruning. The pruned model of each client can be represented by:
\vspace{-0.4em}
\begin{equation}
    \label{pruned model}
    \tilde{\boldsymbol{\theta}}_k=\left[\boldsymbol{\theta}^l_k\right],\ \ l\in\mathcal{L}_k.
\end{equation}
Under FedLP aggregation, each layer is operated independently. Let the indicator function $\mathbbm{1}^l_k\in\{0,1\}$ denotes whether layer $l$ is included in the pruned local model of client $k$. With the pruning information, the layer-wise aggregation rule shall be modified as:
\vspace{-0.2em}
\begin{equation}
    \label{def-fedlp}
    \bar{\boldsymbol{\theta}}^l_t\leftarrow\sum\limits_{k\in P_t}\frac{\mathbbm{1}^l_k\cdot\omega_k}{\sum_{m\in P_t}\mathbbm{1}^l_m\cdot\omega_m}\boldsymbol{\theta}^l_{k,t}.
\end{equation}


Furthermore, we naturally consider the details of the implementation with the homogeneous clients and the heterogeneous clients. The corresponding two FedLP schemes are displayed in Fig. \ref{fig: framework}.

\subsection{Homogeneity scenario}
All clients possess the full global model, which is also a basic assumption in traditional FL. As shown in Fig. \ref{fig: homo scheme}, each client trains its local model according to local settings. Before uploading the parameters, clients carry out a layer-wise pruning to form the models for aggregation. As presented in Algorithm \ref{alg-fedlp-homo}, we formulate a probabilistic rule for homogeneity FedLP, where layer $l$ of client $k$ contributes to aggregation with probability $p^l_k$, termed as layer-preserving-rate (LPR). Before uploading, each client forms the layer-wise pruned model through the layer preserving indicators $\big\{\zeta^l_k\big\}$, which is a binary variable following $Bernoulli(p^l_k)$, i.e.,
\begin{equation}
    \label{eq-zeta}
    \zeta^l_k=\left\{
    \begin{aligned}
         & 1 & \ \ {\rm with\ probability} & \ p^l_k,   \\
         & 0 & \ \ {\rm with\ probability} & \ 1-p^l_k.
    \end{aligned}
    \right.
\end{equation}
With all pruned local models, parameter server carries out the layer-wise pruned aggregation for each layer according to \eqref{def-fedlp} and obtains the updated global model $\bar{\boldsymbol{\theta}}_t$. While parameter download, clients receive the full global model and update their local models. Besides, guaranteed by Law of large numbers, all layers contribute to the aggregation with the proportion $\{p^l_k\}$.

\vspace{0.01em}
\begin{algorithm}[htbp]
    \caption{FedLP For homogeneous model setting.}
    \label{alg-fedlp-homo}
    {\bf Initialization:} local models $\big\{\boldsymbol{\theta}_{k,0}\big\}$, LPRs $\big\{p^l_k\big\}$, system configures, etc.
    \begin{algorithmic}[1]
        \For{$t\leftarrow 1\ \mathbf{to}\ {\rm max\_epoch}$}
        \State Select $K$ clients as participator set, $P_t$;
        \For{client $k$ in $P_t$, parallelly} \Comment{\textcolor{cyan}{participator side}}
        \State Update $\boldsymbol{\theta}_{k,t}\leftarrow{Local\_Train}(\boldsymbol{\theta}_{k,t-1};\mathcal{D}_k)$;
        \State Generate the indicator variables: $\big\{\zeta^l_k\big\}\sim\big\{p^l_k\big\}$;
        \State Form the pruned local model $\tilde{\boldsymbol{\theta}}_{k,t}$ through $\big\{\zeta^l_k\big\}$;
        \State Upload $\tilde{\boldsymbol{\theta}}_{k,t}$ to parameter server;
        \EndFor
        \State Aggregate each layer $\bar{\boldsymbol{\theta}}^l_t$ by \eqref{def-fedlp};\Comment{\textcolor{orange}{server side}}
        \State Update each local model: $\boldsymbol{\theta}_{k,t}\leftarrow\bar{\boldsymbol{\theta}}_t$;\Comment{\textcolor{teal}{client side}}
        \EndFor
    \end{algorithmic}
    {\bf Output:} global model: $\bar{\boldsymbol{\theta}}_t$.
\end{algorithm}

\subsection{Heterogeneity scenario}
Clients train local models of different scales, which is more practical in real-world applications and extends the original FL. For example, in some scenarios, clients may include the mobile devices which has more restrictions on the computation capability, communication bandwidth, energy consumption, etc. Thus, both the local training and the uploading traffic shall be adaptively optimized. Model heterogeneity is always acknowledged as a major challenge in FL because it is hard to design aggregation strategies for various local models. Nevertheless, by adopting the layer-wise pruning mechanism, we can easily build the FedLP schemes for heterogeneous cases. As shown in Fig. \ref{fig: hetero scheme}, clients train sub-models with part of layers and upload them for aggregation. We measure the model complexity of client $k$ using the layer count (LC), $L_k$, which can be determined by the device capability. The local model assigned to client $k$ consists of first $L_k$ layers. To match the data dimensions, the clients with pruned models personalize $\boldsymbol{\theta}^O_{k}$ as their last output layers (in gray).
The pruning assignment rule can be formulated as:
\vspace{-0.4em}
\begin{equation}
    \label{eq-hetero assign}
    \tilde{\boldsymbol{\theta}}_k=\left\{
    \begin{aligned}
         & \Big[\boldsymbol{\theta}^1_{k},\cdots,\boldsymbol{\theta}^{L_k}_{k},\boldsymbol{\theta}^O_{k}\Big] & \ \ {\rm if}\ L_k<L, \\
         & \Big[\boldsymbol{\theta}^1_{k},\cdots,\boldsymbol{\theta}^{L}_{k}\Big]                             & \ \ {\rm if}\ L_k=L.
    \end{aligned}
    \right.
    \vspace{-0.4em}
\end{equation}
Then we develop the FedLP algorithm for heterogeneous scenarios as Algorithm \ref{alg-fedlp-hetero}. Note that the personalized layers, $\Big\{\boldsymbol{\theta}^O_{k}\Big\}$, if exists, are only trained locally and will neither be uploaded for aggregation nor updated by downloaded model.
\addtolength{\topmargin}{0.01in}
\begin{algorithm}[htbp]
    \caption{FedLP For heterogeneous model setting.}
    \label{alg-fedlp-hetero}
    {\bf Initialization:} model LCs $\big\{L_k\big\}$, system configures, etc.
    \begin{algorithmic}[1]
        \For{each client $k\leftarrow 1\ \mathbf{to}\ N$}\Comment{\textcolor{teal}{pruning initialization}}
        \State Assign local model $\tilde{\boldsymbol{\theta}}_{k,0}\leftarrow\left[\boldsymbol{\theta}^1_{k,0},\cdots,\boldsymbol{\theta}^{L_k}_{k,0},\big(\boldsymbol{\theta}^O_{k,0}\big)\right]$;
        \EndFor
        \For{$t\leftarrow 1\ \mathbf{to}\ {\rm max\_epoch}$}
        \State Select $K$ clients as participator set, $P_t$;
        \For{client $k$ in $P_t$, parallelly}\Comment{\textcolor{cyan}{participator side}}
        \State Update $\tilde{\boldsymbol{\theta}}_{k,t}\leftarrow{Local\_Train}(\tilde{\boldsymbol{\theta}}_{k,t-1};\mathcal{D}_k)$;
        \State Upload $\tilde{\boldsymbol{\theta}}_{k,t}[1:L_k]$ to parameter server;
        \EndFor
        \State Aggregate each layer $\bar{\boldsymbol{\theta}}^l_t$ by \eqref{def-fedlp};\Comment{\textcolor{orange}{server side}}
        \State Update each local model: $\boldsymbol{\theta}_{k,t}\leftarrow\bar{\boldsymbol{\theta}}_t$;\Comment{\textcolor{teal}{client side}}
        \EndFor
    \end{algorithmic}
    {\bf Output:} global model: $\bar{\boldsymbol{\theta}}_t$.
\end{algorithm}

\subsection{Theoretical Result}
Due to the page limitation, the detailed theoretical analysis on convergence are skipped and will be presented in our future works. Here, we only give a proposition to show the impact on global gradient caused by FedLP.
\begin{proposition}
    \label{prop}
    Assume that local training is independent with pruning operations. In fairness case where $\omega_k\equiv 1/K$ and $p^l_k\equiv p$ for all $k=1,\cdots,K$, FedLP gets $(1-p)^K$ convergence rate decay compared to non-pruned FL.
\end{proposition}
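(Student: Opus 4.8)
The plan is to average the FedLP layer-wise aggregate \eqref{def-fedlp} over the pruning randomness, compare the result with the ordinary FedAvg aggregate, and then substitute the resulting scaling factor into a standard one-round descent inequality.

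First I would fix the participant set $P_t$ (of size $K$) together with the locally trained models $\{\boldsymbol{\theta}^l_{k,t}\}_{k\in P_t}$, and take expectation only over the Bernoulli layer indicators $\{\zeta^l_k\}$, invoking the hypothesis that local training is independent of the pruning operation. For a fixed layer $l$ write $S_l=\{k\in P_t:\zeta^l_k=1\}$. By \eqref{def-fedlp} with $\omega_k\equiv 1/K$, when $|S_l|\ge 1$ the aggregate is $\bar{\boldsymbol{\theta}}^l_t=|S_l|^{-1}\sum_{k\in S_l}\boldsymbol{\theta}^l_{k,t}$, while when $|S_l|=0$ the layer is left unchanged, $\bar{\boldsymbol{\theta}}^l_t=\bar{\boldsymbol{\theta}}^l_{t-1}$. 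Conditioning further on $|S_l|=j$ with $j\ge 1$, the set $S_l$ is uniform over the $j$-subsets of $P_t$, and a double-counting argument gives $\mathbb{E}[\,|S_l|^{-1}\sum_{k\in S_l}\boldsymbol{\theta}^l_{k,t}\mid |S_l|=j\,]=\tfrac1K\sum_{k\in P_t}\boldsymbol{\theta}^l_{k,t}$, i.e. exactly the non-pruned FedAvg aggregate $\bar{\boldsymbol{\theta}}^l_{t,\mathrm{FL}}$, independently of $j$.

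Since $\Pr[|S_l|\ge 1]=1-(1-p)^K$, summing over $j$ yields
\[
    \mathbb{E}\big[\bar{\boldsymbol{\theta}}^l_t\big]
      =\big(1-(1-p)^K\big)\,\bar{\boldsymbol{\theta}}^l_{t,\mathrm{FL}}
       +(1-p)^K\,\bar{\boldsymbol{\theta}}^l_{t-1},
\]
so the expected FedLP update $\mathbb{E}[\bar{\boldsymbol{\theta}}^l_t]-\bar{\boldsymbol{\theta}}^l_{t-1}$ is exactly $\big(1-(1-p)^K\big)$ times the FedAvg update $\bar{\boldsymbol{\theta}}^l_{t,\mathrm{FL}}-\bar{\boldsymbol{\theta}}^l_{t-1}$. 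Modeling the local step(s) so that the FedAvg update equals $-\eta\,\nabla_l F(\bar{\boldsymbol{\theta}}_{t-1})$ in expectation, the effective global gradient driving FedLP is $\big(1-(1-p)^K\big)\nabla F$: a $(1-p)^K$ fraction of each round's progress is lost. Feeding this rescaled gradient through the usual $L$-smoothness / (strong) convexity analysis of FedAvg replaces the effective step size $\eta$ by $\big(1-(1-p)^K\big)\eta$ in the per-round contraction, which is the stated $(1-p)^K$ decay of the convergence rate.

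The step I expect to be the main obstacle is the conditional-expectation computation: one must argue that, after conditioning on the number of uploaders $j$, the identity of the uploading clients is exchangeable and still independent of the (possibly non-i.i.d.) local models, so that the conditional mean of the pruned aggregate collapses onto the full FedAvg aggregate for every $j\ge 1$; this is precisely where the ``independent of pruning'' assumption is used, and it is what makes the factor clean. A lesser, more bookkeeping issue is to fix a concrete baseline convergence theorem and track exactly where $1-(1-p)^K$ enters, but since the full convergence analysis is deferred, exhibiting the gradient-scaling identity and citing the standard bound suffices.
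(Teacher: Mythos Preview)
Your argument is correct and reaches the same scaling identity $\mathbb{E}[\text{FedLP update}]=(1-(1-p)^K)\cdot(\text{FedAvg update})$ as the paper, but via a different decomposition. The paper works client-by-client: it writes $\hat g^l=\sum_k\frac{\zeta^l_k}{\sum_m\zeta^l_m}g^l_k$, conditions on the single indicator $\zeta^l_k$, and then evaluates $\mathbb{E}\big[1/(1+\sum_{m\neq k}\zeta^l_m)\big]$ against a $\mathrm{Bin}(K-1,p)$ law using the combinatorial identity $\tfrac{1}{m}\binom{K-1}{m-1}=\tfrac{1}{K}\binom{K}{m}$. You instead condition on the count $|S_l|=j$ and use exchangeability of the uploading subset to collapse the conditional mean directly onto the FedAvg average for every $j\ge 1$, bypassing the binomial sum entirely. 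Your route is slightly cleaner conceptually and, notably, handles the $|S_l|=0$ event explicitly (the paper silently treats $0/0$ as $0$, which amounts to the same ``no update'' convention but is left implicit). Both proofs end identically by reading the factor $1-(1-p)^K$ as a rescaling of the effective gradient and invoking a standard descent bound; the paper, like you, defers the full convergence bookkeeping.
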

\begin{proof}
    \label{prop-proof}
    Let $\{g^l_k\}$ denote the accumulated local gradients of $\{\boldsymbol{\theta}^l_k\}$ after local training. We use $\zeta^l_k$ in \eqref{eq-zeta} to represent the participation indicator of $\boldsymbol{\theta}^l_k$. By \eqref{def-fedlp}, the global aggregated gradient of $l$-th layer is $\hat{g}^l=\sum_{k}\frac{\zeta^l_k}{\sum_{m}\zeta^l_m}g^l_k$. The aggregated gradient of non-pruned FL is $\bar{g}^l=\frac{1}{K}\sum_{k}g^l_k$.
    Then the expectation of the pruned gradients can be obtained by:
    \vspace{-0.2em}
    \begin{align}
        \label{prf-eq1}
        \mathbb{E}\hat{g}^l & =\mathbb{E}\left\{\sum\limits_{k=1}^K\frac{\zeta^l_k}{\sum_{m}\zeta^l_m}g^l_k\right\}=\sum\limits_{k=1}^K\mathbb{E}\left\{\frac{\zeta^l_k}{\sum_{m}\zeta^l_m}g^l_k\right\} \\
                            & =\sum\limits_{k=1}^K\mathbb{E}\left\{\mathbb{E}\Big\{\frac{\zeta^l_k}{\sum_{m}\zeta^l_m}g^l_k\Big|\zeta^l_k\Big\}\right\}                                                  \\
                            & =\sum\limits_{k=1}^K p\cdot\mathbb{E}\left\{\frac{1}{1+\sum_{m\neq k}\zeta^l_m}\cdot g^l_k\right\}                                                                         \\
                            & =\sum\limits_{k=1}^K p\left[\sum\limits_{m=1}^K\frac{1}{m}\binom{K-1}{m-1}p^{m-1}(1-p)^{K-m}\right]\mathbb{E}g^l_k                                                         \\
                            & \overset{\textcircled{\small{1}}}{=}\sum\limits_{k=1}^K\left[\sum\limits_{m=1}^K\frac{1}{K}\binom{K}{m}p^m(1-p)^{K-m}\right]\mathbb{E}g^l_k                                \\
                            & =\left[1-(1-p)^K\right]\sum\limits_{k=1}^K \frac{\mathbb{E}g^l_k}{K}=\left[1-(1-p)^K\right]\mathbb{E}\bar{g}^l \label{prf-eq-s}
    \end{align}
    where \textcircled{\small{1}} holds because $\frac{1}{m}\binom{K-1}{m-1}=\frac{(K-1)!}{(K-m)!m!}=\frac{1}{K}\binom{K}{m}$. \eqref{prf-eq-s} means that the aggregated gradient scale of FedLP decreases compared to the non-pruned one, which implies $(1-p)^K$ convergence rate decay.
\end{proof}\vspace{-0.2em}
This theoretical result is significant and shows that the impacts of FedLP on convergence can be mitigated by increasing the participation clients or the LPRs $\{p^l_k\}$.

Overall, we give a brief summary for these two FedLP schemes: Homogeneous scheme mainly focuses on the communication progress and concerns less about the local training; On the contrary, for the heterogeneous scheme, both local computation and parameter communication are reduced. It is notable that such a random pruning processing will protect the FL model from attacks in some degrees and strengthen the system robustness as well as security since the attackers are unaware of the exact layer indexes.



\section{Evaluations}
\label{section evaluation}
In this section, we carry out several experiments to evaluate the performance of FedLP under two mentioned scenarios.
\subsection{Experiment Setups}
We develop the experiments in an image classification FL task under CIFAR-10 dataset. For basic configuration, we build up a FL system with 100 clients in total. The participation rate is set as 0.1, which means that 10 clients are randomly selected for aggregation in every global round. Before aggregation, clients proceed 5 epochs to train the local models.

\subsubsection{Global Model}
A CNN based model with 6 Conv layers is adopted as the global model. Batch normalization (BN) and maxpooling operations are also conducted following each Conv layer. At the end, a FC layer is placed to assemble the features, followed by another FC as the output layer.
\subsubsection{Data Split}
We conduct the experiments under three popular data settings of FL, iid, mixed non-iid and Dirichlet non-iid. For iid split, the training samples are randomly assigned to 100 clients, which means that each client possesses 500 images of uniform categories. Under mixed non-iid (M-niid) split, the training samples are sorted into shards and partitioned to clients. We set the size of each shard as 250 with 5\% uniformly sampled from all categories and each client takes 2 shards. Dirichlet non-iid (D-niid) split is also a widely mentioned data partition rule \cite{hsu2019measuring}. The samples of each category are divided into $N$ parts according to a Dirichlet distribution with parameter $\alpha=1$, so that the clients own training data of different volumes.

\subsubsection{Pruning Strategies}
We adopt FedAvg as the baseline and two proposed layer-wise pruning schemes are implemented. For homogeneous cases, we employ the consistent LPRs, $p^l_k\equiv p$, abbreviated as FedLP-Homo($p$). For heterogeneous cases, we prune the global model into 5 ordered layer-sequences with different LCs. These 5 sub-models are assigned to clients according to LC distributions. Specifically, the parameter $l$ of FedLP-Hetero($l$) represents the case where sub-models with $l$ LC are mostly assigned with probability 0.6 and other sub-models takes 0.1 respectively. In particular, the parameter 'u' means that all sub-models are chosen uniformly.

\begin{table}[htbp]
    \vspace{-1.0em}
    \caption{\upshape A numerical comparison on accuracy, communication and computation costs.}
    \label{tab: compare}
    \centering
    \begin{tabular}{cccc}
        \toprule

        \multirow{2}*{Schemes} & \multirowcell{2}{Test accuracy (\%)                                                 \\ iid / D-niid / M-niid} & \multirowcell{2}{Comm. \\ \#param (k)} & \multirowcell{2}{Comp. \\MFLOPs} \\
        \\
        \midrule
        FedAvg                 & 77.94 / 77.67 / 67.57                            & 1102.93         & 36.36          \\
        \midrule[0.3pt]
        FedLP-Homo(0.1)        & 75.32 / 71.30 / 44.21                            & \textbf{606.61} & 36.36          \\
        FedLP-Homo(0.3)        & 78.20 / 74.92 / 63.24                            & \textbf{716.91} & 36.36          \\
        FedLP-Homo(0.5)        & 77.60 / 77.13 / 66.01                            & \textbf{827.20} & 36.36          \\
        FedLP-Homo(0.7)        & \textbf{78.47} / \textbf{77.71} / \textbf{70.29} & \textbf{937.49} & 36.36          \\
        \midrule[0.3pt]
        FedLP-Hetero(1)        & 66.00 / 67.66 / 37.30                            & \textbf{169.60} & \textbf{17.73} \\
        FedLP-Hetero(3)        & 68.82 / 68.29 / 39.54                            & \textbf{225.28} & \textbf{24.62} \\
        FedLP-Hetero(u)        & 72.42 / 64.65 / 57.51                            & \textbf{318.66} & \textbf{23.83} \\
        FedLP-Hetero(5)        & \textbf{76.28} / \textbf{76.34} / \textbf{65.69} & \textbf{710.80} & \textbf{30.10} \\
        \bottomrule
    \end{tabular}
    \vspace{-2.0em}
\end{table}
\begin{figure}[htbp]
    \centering
    \subfigure[Under iid data.]{
        \label{fig: iid acc} 
        \begin{minipage}[t]{0.154\textwidth}
            \centering
            \includegraphics[width=1\linewidth]{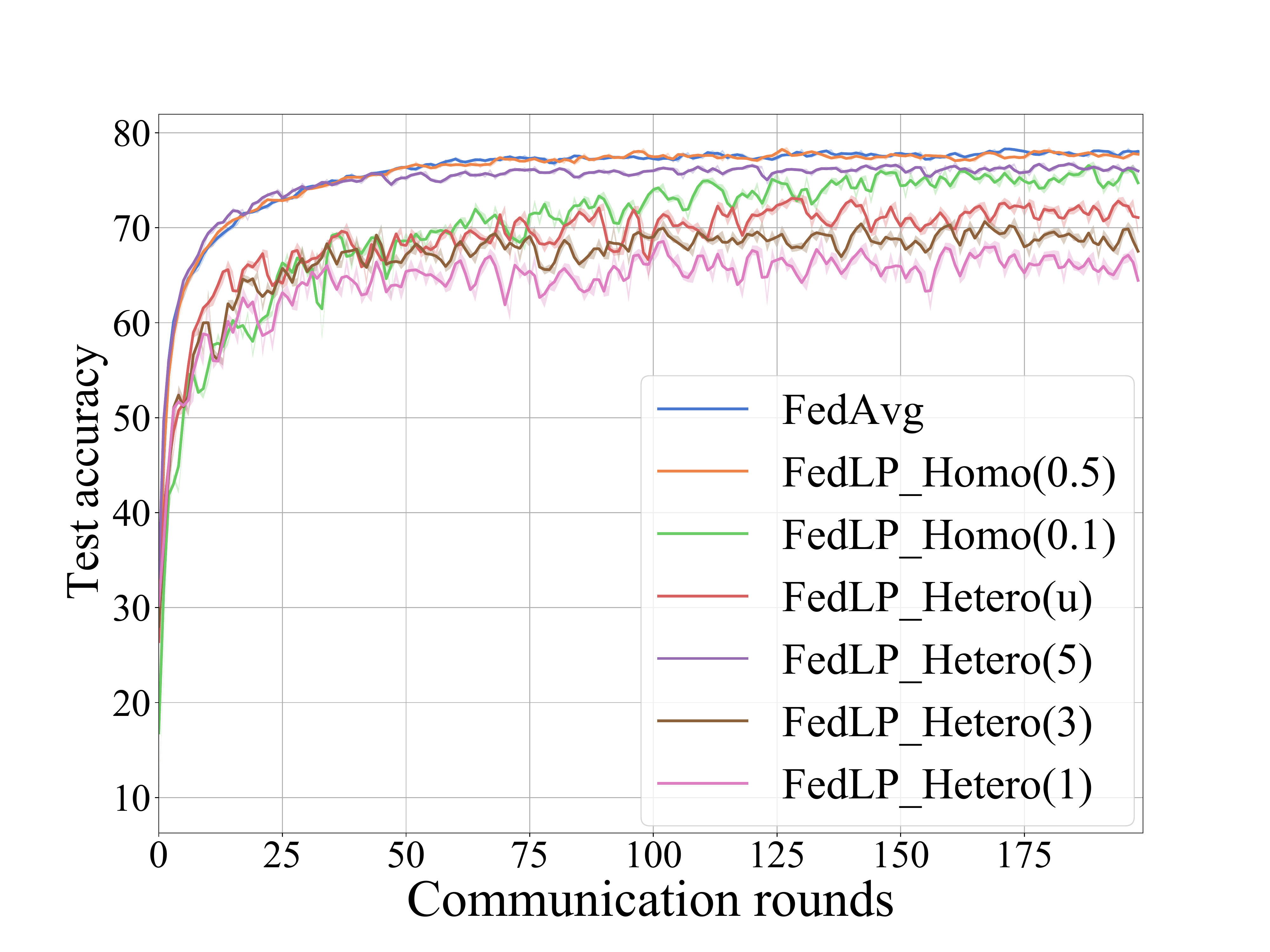}
        \end{minipage}}%
    \hspace{-0.5em}
    \subfigure[Dirichlet-niid.]{
        \label{fig: dniid acc} 
        \begin{minipage}[t]{0.154\textwidth}
            \centering
            \includegraphics[width=1\linewidth]{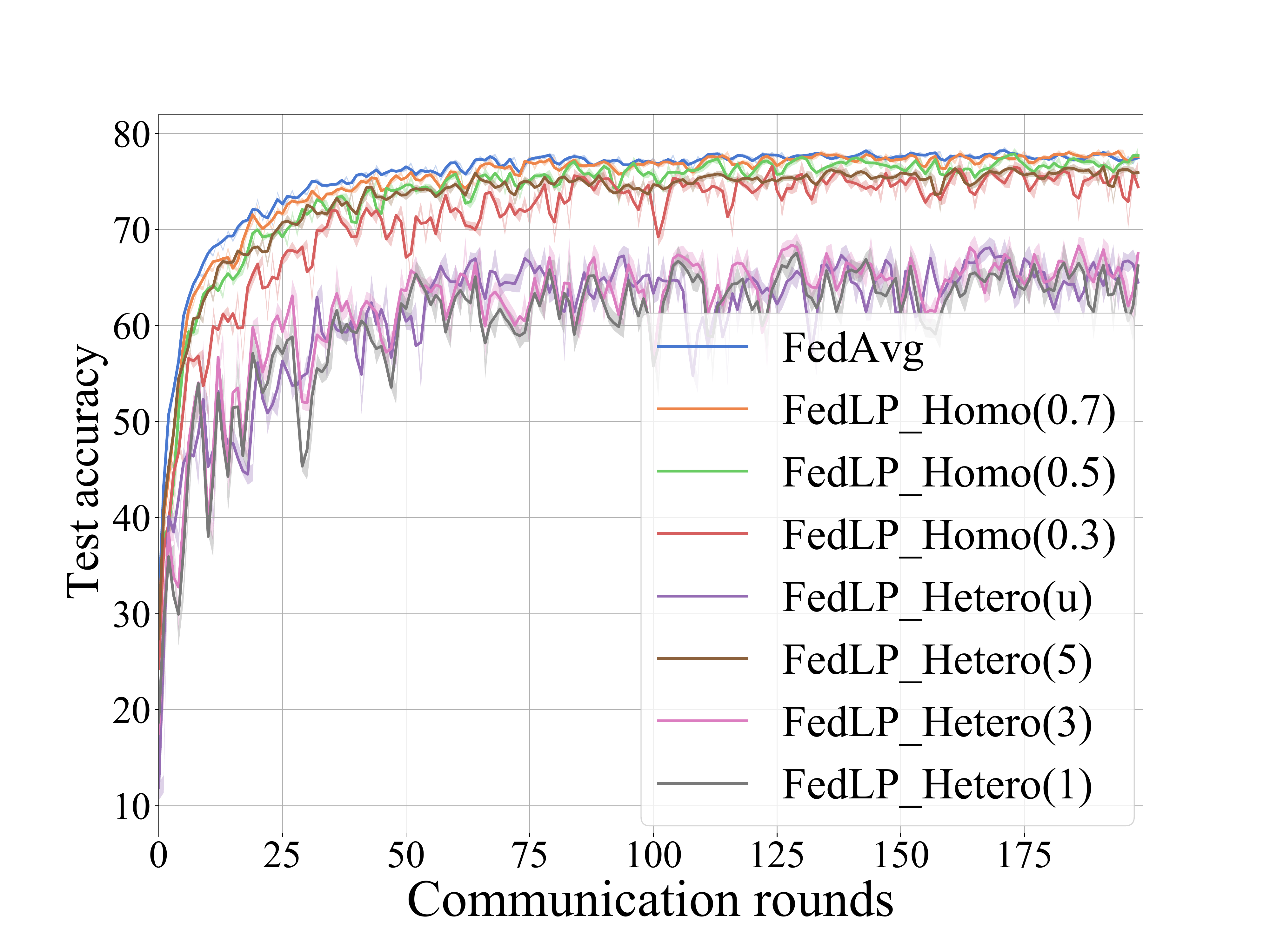}
        \end{minipage}}
    \hspace{-0.85em}
    \subfigure[Mixed-niid.]{
        \label{fig: mniid acc} 
        \begin{minipage}[t]{0.154\textwidth}
            \centering
            \includegraphics[width=1\linewidth]{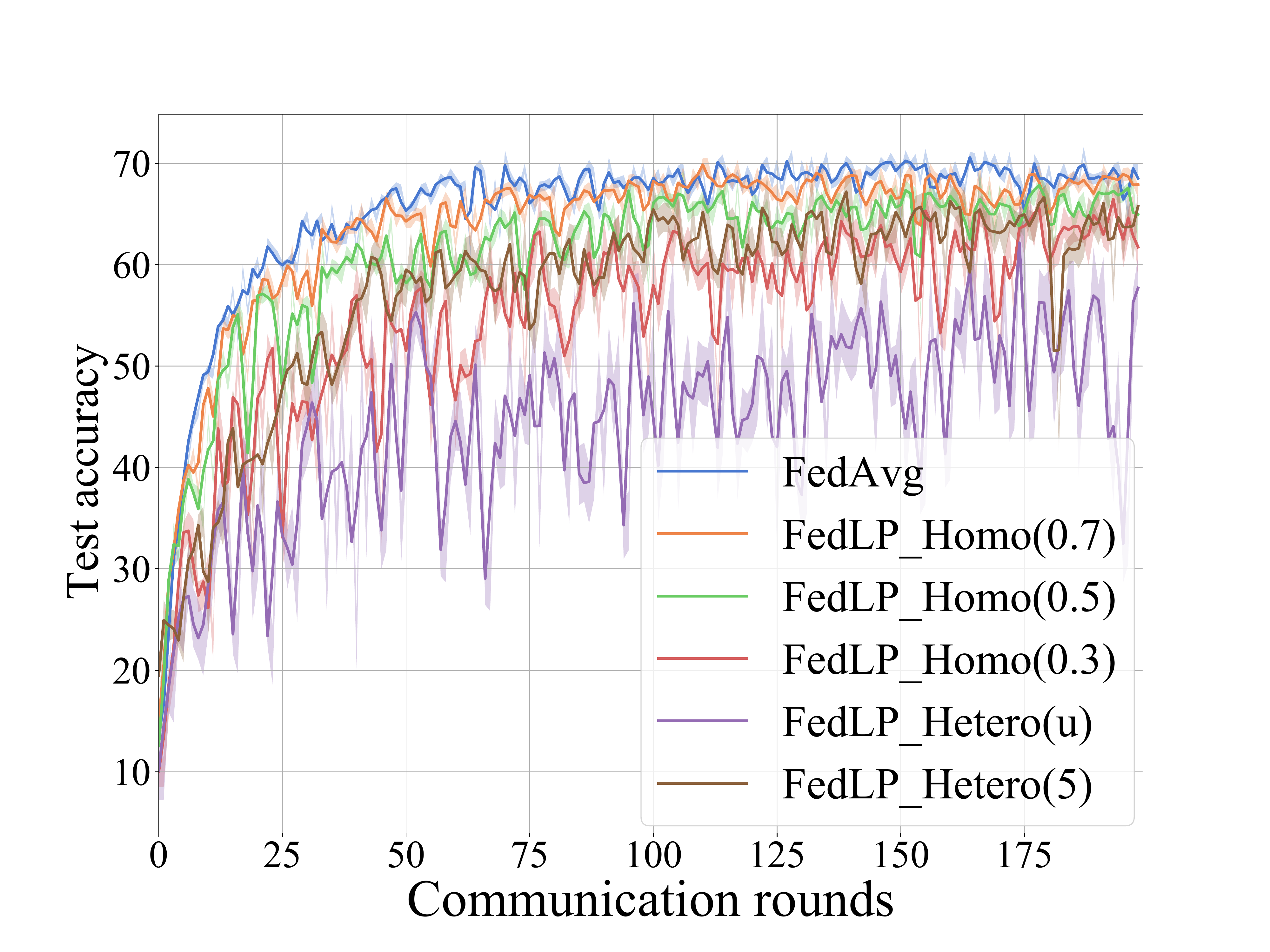}
        \end{minipage}}
    \vspace{-0.1cm}
    \caption{Comparisons under different FL data splits based on CIFAR-10.}
    \vspace{-0.1cm}
    \label{fig: acc} 
\end{figure}
\subsection{Accuracy Performance}
We evaluate the performance of the proposed layer-wise pruning schemes under three FL data settings. The test accuracy curves of global models are plotted in Fig. \ref{fig: acc} and the average numerical results are listed in Table \ref{tab: compare}. One can observe that for higher LPRs (0.7), FedLP-Homo performs even better than original FedAvg under iid and non-iid settings, but saves 30\% communication loads for model upload. Such improvement on generalization capability is also a result of the random LP processing. For intermediate LPRs, FedLP-Homo(0.5) achieves similar accuracy, convergence and stability as communication round increases. The convergence performance of large LPR also fits the Proposition \ref{prop}. With 0.3 LPR, the performance under iid data keeps the same. The accuracy drops by 3\% and 4\% under two non-iid settings, compared to non-pruned schemes. In particular, when LPR is set extremely low (0.1), the global models under iid and Dirichlet-niid data only lose 3\% and 6\% classification accuracy, which is still acceptable.
Large performance gap occurs under mixed-niid data training.

For heterogeneous cases, both the local training and the transmission models are pruned. As shown in Fig. \ref{fig: iid acc} and \ref{fig: dniid acc}, FedLP schemes still reach high accuracy under iid and Dirichlet-niid data. However, the model accuracy as well as the convergence evidently degrade under mixed-niid data. We explain these phenomena as the result of the non-iid degree and the heterogeneous local models. For severely non-iid data in Fig. \ref{fig: mniid acc}, vanilla FedAvg based approaches cannot handle the imbalanced training data and the divergent local models. Besides, FedLP-Hetero assigns models of different complexity and the personalized output layers are absent for aggregation, which also causes the unstability of the global model. Therefore, one can treat such scheme as an alternative solution for the scenarios where the clients are strictly constrained on communication-computation resources and the devices are of variant capability. To further improve the accuracy, the techniques against severely non-iid data shall be added.

\subsection{Communication-Computation Efficiency}
\begin{figure}[htbp]
    \vspace{-0.25cm}
    \centering
    \begin{minipage}[t]{0.43\textwidth}
        \includegraphics[width=1\linewidth]{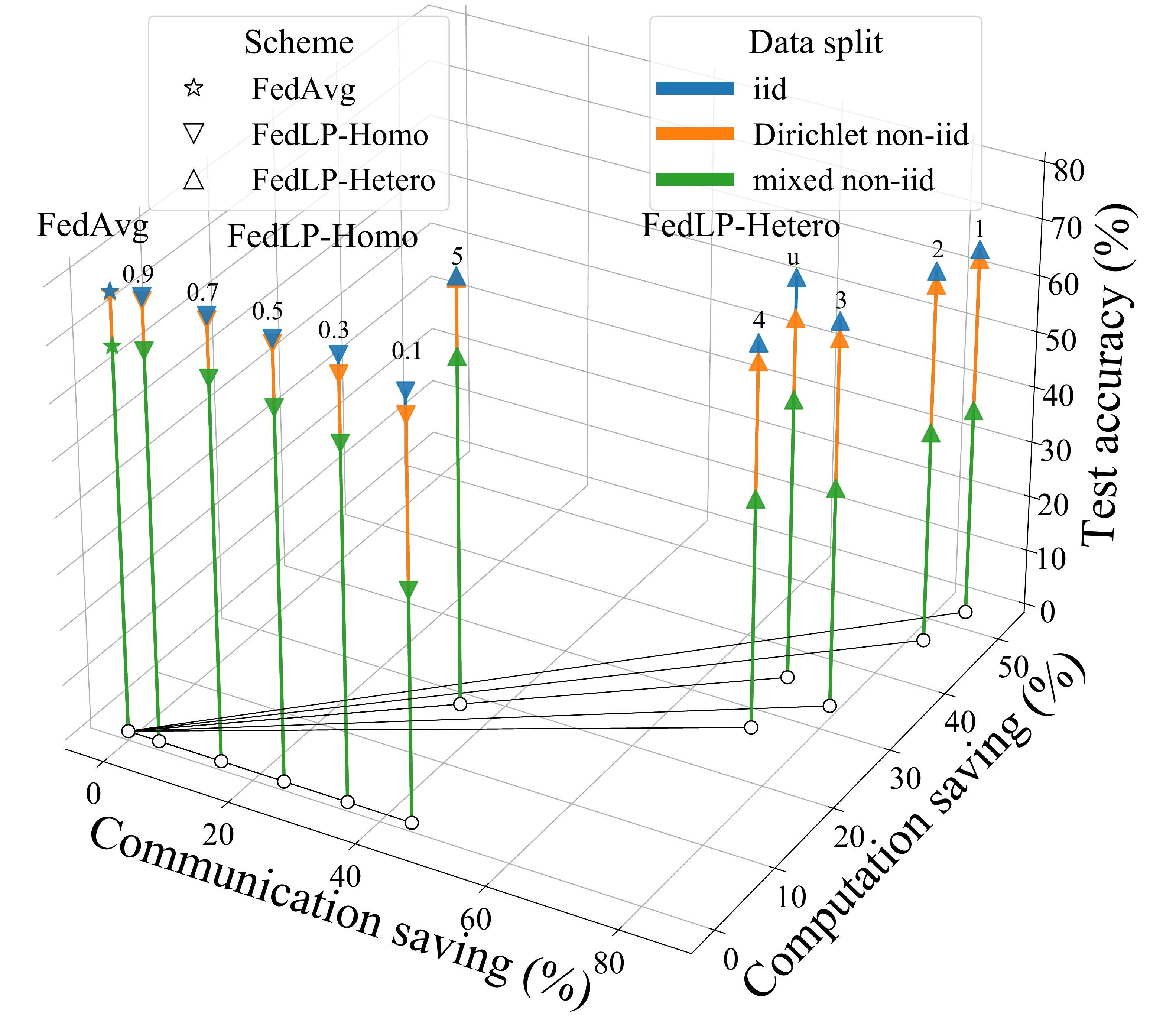}
    \end{minipage}
    \vspace{-0.25cm}
    \caption{Trade-offs: accuracy vs. communication-computation efficiency. Homo: $p=\{0.1,0.3,0.5,0.7,0.9\}$; Hetero: $\{{\rm uniform},1,2,3,4,5\}$.}
    \vspace{-0.2cm}
    \label{fig: 3d} 
\end{figure}
We also evaluate the communication-computation efficiency of FedLP. The communication loads are measured by the average parameter count in a global epoch, containing the local model upload and the global model download. The computation complexity is represented by the million floating point of operations (MFLOPs) per local model. These two measures are also listed in Table \ref{tab: compare}. Specifically, FedLP-Homos schemes reduce the data clients transmit to parameter server in different degree and execute the same local training as FedAvg. Meanwhile, under FedLP-Hetero, both local training and the model aggregation are pruned. Thus, both communication and computation costs decrease. Besides, since the traffics of uplinks and downlinks are optimized together, the excessive communication can be further eliminated. For example, the uniform FedLP-Hetero schemes requires only half the communication rates of FedLP-Homo with 0.1 LPR.

Moreover, we sketch a 3D plot of several schemes to visualize the model accuracy and the system efficiency. The x-y axis represents the percentage of communication and computation savings respectively. And the height is the test accuracy of global model. The farther a projection on x-y plane locates towards 0-point, the lower communication/computation capabilities are required, which increases the system efficiency. Fig. \ref{fig: 3d} intuitively reflects such trade-offs between model performance and system costs which provide guidance for the system designs and layer-wise pruning settings.

Above results suggest that it is not necessary either for clients to possess the full global model or for parameter server to collect all layers of local models. While the data is not highly non-iid, clients are allowed to prune some layers with acceptable accuracy decay, which significantly reduces the communication loads as well as the local computation complexity. In other words, FedLP relaxes the restrictions on communication and computation for practical FL systems.

\section{Conclusion}
\label{section conclusion}
In this work, we rethought the pruning strategies in the context of FL and proposed a layer-wise pruning mechanism, FedLP. Instead of dropping the intra-layer parameters vertically, FedLP operates pruning horizontally on each layer to improve the communication-computation efficiency of FL systems. We drew a basic sketch of layer-wise pruning by developing two probabilistic FedLP schemes for homogeneous and heterogeneous scenarios. Theoretical guarantees were also derived to interpret the convergence of FedLP. The experimental outcomes verified that FedLP reduces both communication and computation costs with the controllable loss of model performance. Moreover, FedLP is model-agnostic and can be easily deployed in different FL schemes regardless of the neural network structures and the layer types. Such an explicit pruning mechanism provides alternative ways to implement FL tasks on edge devices with variant capabilities.

Based on FedLP, more works of several aspects can be investigated in the future. Firstly, layer-wise pruning schemes such as the dynamic pruning with adaptive layer weights shall be explored to fit the changeable environments and the non-iid data. Secondly, the theoretical analysis on learning convergence and system configuration will be presented in our future works. In additional, FedLP's potentials on system robustness and model security can be further discussed.

\bibliographystyle{IEEEtran}
\bibliography{IEEEabrv, refs}

\end{document}